\RequirePackage{amsmath}
\documentclass{article} 

\usepackage{amsmath}
\usepackage{amsfonts}
\usepackage{amssymb}
\usepackage{amsthm} % To use \theoremstyle
\usepackage{tablefootnote}
\usepackage{authblk}

% For figure
\usepackage{epsfig}

% For algorithm environment 
\usepackage[ruled, vlined, linesnumbered]{algorithm2e}  
\SetKwInOut{Parameter}{Global parameters}

\theoremstyle{plain}
\newtheorem{definition}{Definition}[section]

\theoremstyle{definition}

\newtheorem{proposition}{Proposition}[section]

\theoremstyle{remark}

%% Sets
\def\Rset{\ensuremath{\mathbb R}} % Set of real
\def\Xset{\ensuremath{\mathcal X}} % Feature space
\def\R{\ensuremath{\mathcal R}} % Set of rules
\def\S{\ensuremath{\mathcal S}} % Set of selected rules
\def\K{\ensuremath{\mathcal K}} % Partition
\def\TT{\ensuremath{\mathfrak S}} % List of sets of rules

%% Notation
\def\E{\ensuremath{\mathbb E}} % Expectation
\def\P{\ensuremath{\mathbb P}} % Probability

%% Vectors and Matrix
\def\X{\ensuremath{\mathbf X}} % d-dimensional random vector
\def\x{\ensuremath{\mathbf x}} % features vector
\def\y{\ensuremath{\mathbf y}} % traget vector
\def\r{\ensuremath{\mathbf r}} % rule/hyperrectangles

%% Commandes
\DeclareMathOperator*{\argmin}{arg\,min}
\newcommand\ind[1]{\mathbf{1}_{#1}}
\providecommand{\keywords}[1]{\textbf{\textit{Keywords: }} #1}

\title{Rule Induction Partitioning Estimator: \\
	\large{Design of an interpretable prediction algorithm}}

\author{Vincent Margot, Jean-Patrick Baudry, Frederic Guilloux, Olivier Wintenberger}

\affil{Sorbonne Universit\'es,\\
	Campus Pierre et Marie Curie,\\
	Laboratoire de Probabilit\'es, Statistique et Mod\'elisation,\\
	75005 Paris,\\
	vincent.margot@upmc.fr}

\date{}

\begin{document}

	\maketitle
	
	\begin{abstract}
		RIPE is a novel deterministic and easily \emph{understandable} prediction algorithm
		developed for continuous and discrete ordered data. It infers a model, from a sample,
		to predict and to explain a real variable $Y$ given an input variable $X \in \Xset$
		(features). The algorithm extracts a sparse set of hyperrectangles $\r \subset \Xset$,
		which can be thought of as rules of the form \emph{If-Then}. This set is then
		turned into a partition of the features space $\Xset$ of which each cell is explained as
		a list of rules with satisfied their \emph{If} conditions.
		
		The process of RIPE is illustrated on simulated datasets and its efficiency compared
		with that of other usual algorithms.
	\end{abstract}
	
	\keywords{Machine learning - Data mining - Interpretable models - Rule induction - Data-Dependent partitioning - Regression models.}
	
	\section{Introduction}
	To find an easy way to describe a complex model with a high accuracy is an important
	objective for machine learning. Many research fields such as medicine, marketing, or
	finance need algorithms able to give a reason for each prediction made. Until now, a
	common solution to achieve this goal has been to use induction rule to describe cells
	of a partition of the features space $\Xset$. A rule is an \emph{If-Then}
	statement which is understood by everyone and easily interpreted by experts (medical
	doctors, asset managers, etc.).
	We focus on rules with a \emph{If} condition defined as a hyperrectangle of $\Xset$.
	Sets of such rules have always been seen as decision trees,
	which means that there is a one-to-one correspondence between a rule and a generated
	partition cell. Therefore, algorithms for mining induction rules have usually been developed
	to solve the \emph{optimal decision tree} problem \cite{hyafil76}. Most of them use a
	greedy splitting technique \cite{CART, Quinlan93, Friedman08, Dembczynski08} whereas
	others use an approach based on Bayesian analysis \cite{Chipman98, Letham15, Yang17}.
	
	RIPE (Rule Induction Partitioning Estimator) has been developed to be a \emph{deterministic}
	(identical output for an identical input) and easily \emph{understandable} (simple to
	explain and to interpret) predictive algorithm. In that purpose, it has also been based
	on rule induction. But, on the contrary to other algorithms, rules selected by RIPE are not
	necessarily disjoint and are independently identified. So, this set of selected rules
	does not form a partition and it cannot be represented as a decision tree. This set is
	then turned into a partition. Cells of this partition are described by a set of activated
	rules which means that their \emph{If} conditions are satisfied. So, a same rule can
	explain different cells of the partition. Thus, RIPE is able to generate a fine partition
	whose cells are easily described, which would usually require deeper decision tree and
	less and less  \emph{understandable} rules.
	Moreover, this way of partitioning permits to have cells which are not a hyperrectangles.
	 
	The simplest estimator is the constant one which predicts the empirical expectation of the
	target variable. From it, RIPE searches rules which are \emph{significantly} different. To
	identify these, RIPE works recursively, searching more and more complex rules, from the most
	generic to the most specific ones. When it is not able to identify new rules, it extracts
	a set of rules by an empirical risk minimization. 
	To ensure a covering of $\Xset$, a \emph{no rule satisfied} statement is added to the set.
	It is defined on the subset of $\Xset$ not covered by the union of the hyperrectangles of
	the extracted rules. At the end, RIPE generates a partition spanned by these selected rules
	and builds an estimator. But the calculation of a partition from a set of hyperrectangles
	is very complex. To solve this issue, RIPE uses what we called the \emph{partitioning trick}
	which is a new algorithmic way to bypass this problem.
		
	\subsection{Framework}
	Let $\left(\X, Y\right) \in \Xset \times \Rset$, where $\Xset =\Xset_1\times\cdots\times\Xset_d$,
	be a couple of random variables with unknown distribution $P$. 
	
	\begin{definition} 
	\begin{enumerate}
			\item Any measurable function $g: \Xset \to \Rset$ is called a \emph{predictor} and
			we denote by $\mathbb G$ the set of all the predictors.
			\item The accuracy of $g$ as a predictor of $Y$ from $\X$ is measured by the quadratic
			\emph{risk}, defined by
			\begin{equation}\label{Risk}
			\mathcal L \left( g\right) = \E_{(\X,Y)\sim\P} \left[( g(\X)-Y)^2\right].
			\end{equation}
		\end{enumerate}
	\end{definition}

	From the properties of the conditional expectation, the optimal predictor is the regression function
	(see \cite{Arlot10,DistributionFree} for more details):
	\begin{equation}\label{reg-function}
	g^* := \E\left[Y | \X\right] = \argmin \limits_{g \in \mathbb G} \mathcal L \left( g\right)
	\text{ a.s.} 
	\end{equation}
	
	\begin{definition}
		Let $D_n = \left( (\X_1, Y_1),\dots,(\X_n, Y_n) \right)$ be a sample of independent and
		identically distributed copies of
		$\left(\X, Y\right)$. 
	\end{definition}

	\begin{definition}
		The empirical risk of a predictor $g$ on $D_n$ is defined by:
			\begin{equation}\label{empirical_risk}
			\mathcal L _n  \left( g\right) = \dfrac1n\sum_{i=1}^{n} \left( g(\X_i) - Y_i \right)^2
			\end{equation}
	\end{definition}
	
	Equation~\eqref{reg-function} provides a link between prediction and estimation of the
	regression function. So, the purpose is to produce an estimator of $g^*$ based on a
	partition of $\Xset$ that  provides a good predictor of $Y$. However, the partition must
	be simple enough to be \emph{understandable}.
	
	\subsection{Rule Induction Partitioning Estimator}
	The RIPE algorithm is based on rules. Rules considered in this paper are defined as follows:
	\begin{definition}\label{rule_def} A rule is an \emph{If-Then} statement such that its
		\emph{If} condition is a hyperrectangle $\r =\prod_{k=1}^{d}I_k$, where each
		$I_k$ is an interval of $\Xset_k$.
		
	\end{definition}
	\begin{definition} For any set $E \subset \Xset$, the empirical conditional expectation
		of $Y$ given $X\in E$ is 
		$$\mu(E, D_n):=\dfrac{\sum \limits_{i=1}^n y_i \ind{\x_i \in E}}{\sum \limits_{i=1}^n
			\ind{\x_i \in E}},$$ 
		where, by convention, $\frac00 = 0$.
	\end{definition}
	The natural estimator of $g^*$ on any $\r \subset \Xset$ is the empirical conditional
	expectation of $Y$ given $X \in \r$. A rule is completely defined by its condition $\r$.
	So, by an abuse of notation we do not distinguish between a rule and its condition. 
	
	A set of rules $\S_n$ is selected based on the sample $D_n$. Then $\S_n$ is turned into
	a partition of $\Xset$ denoted by $\K(\S_n)$ (see Section \ref{partition_trick}). To
	make sure to define a covering of the features space the \emph{no rule satisfied}
	statement is added to the set of hyperrectangles. 
	\begin{definition}\label{norule_def}
		The \emph{no rule satisfied} statement for a set or rules $\S_n$, is an \emph{If-Then}
		statement such that its \emph{If} condition is the subset of $\Xset$ not covered
		by the union of the hyperrectangles of $\S_n$.
	\end{definition}
	One can notice that it is not a rule according to the definition \ref{rule_def} because it
	is not necessarily defined on a hyperrectangle.
	
	Finally, an estimator $\hat g^{\S_n}$ of the regression function $g^*$ is defined:
	\begin{equation}
	\hat g^{\S_n}:\left(\x, D_n\right)\in \Xset \times (\Xset \times \Rset)^n \;\mapsto\;
	\mu(K_n(\x), D_n),
	\end{equation}
	with $K_n(\x)$ the cell of $\K(\S_n)$ which contains $\x$. 
	
	The partition itself is \emph{understandable}. Indeed, the prediction $\hat g^{\S_n}(\x,D_n)$
	of $Y$ is of the form "\emph{If rules \dots are satisfied, then $Y$ is predicted by \dots}"
	The cells of $\K(\S_n)$ are explained by sets of satisfied rules, and the values $\mu(K_n(\x),D_n)$
	are the predicted values for $Y$ .
	
	\section{Fundamental Concepts of RIPE}
	RIPE  is based on two concepts, the \emph{partitioning trick}  and the
	\emph{suitable rule}.
	
	\subsection{Partitioning Trick}\label{partition_trick}
	The construction of a partition from a set of $R$ hyperrectangles is time consuming
	and it is an exponential complexity operation and this construction occurs several times
	in the algorithm. To reduce the time and complexity we have developed the
	\emph{partitioning trick}.
	
	First, we remark that to calculate $\mu(K_n(\x),D_n)$, it is not necessary to
	build the partition, it is sufficient to identify the cell which contains $\x$.
	Figure \ref{fig_make_partition} is an illustration of this process. To do
	that, we first identify rules activated by $\x$, i.e that $\x$ is in their
	hyperrectangles (Fig \ref{fig_make_partition}, to the upper left). And we
	calculate the hyperrectangle defined by their intersection (Fig
	\ref{fig_make_partition}, to the lower left). Then, we calculate the union of 
	hyperrectangles of rules which are not activated (Fig \ref{fig_make_partition},
	to the upper right). To finish, we calculate the cell by difference of the intersection 
	and the union (Fig \ref{fig_make_partition}, to the lower right). The generated subset
	is the cell of the partition $\K(\S_n)$ containing $\x$ .
	
	\begin{figure}
		\centering
		\includegraphics[scale=0.35]{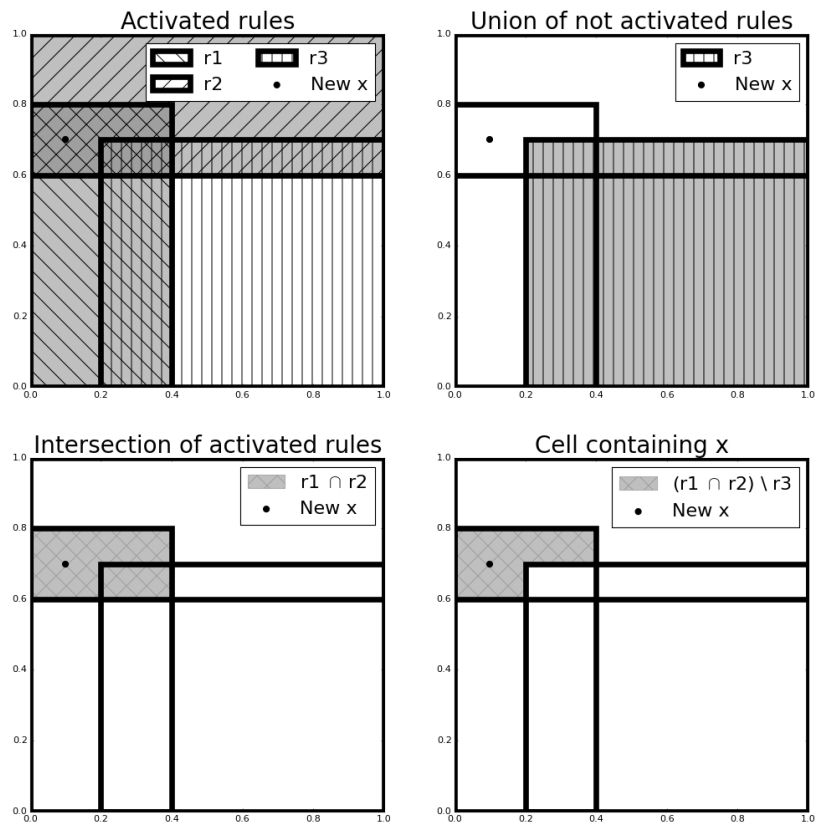}
		\caption{Different steps of the \emph{partitioning trick}
			for a set of three hyperrectangles $\{\r_1, \r_2, \r_3\}$ of $ [0,1]^2$ and a new observation
			$\x = (0.1, 0.7)$. It is important to notice that the cell containing $\x$,
			$(\r_1 \cap \r_2) \setminus \r_3$, is not a hyperrectangle so it does not define a rule.}
		\label{fig_make_partition}
	\end{figure}

	\begin{proposition}
		Let $\S_n$ be a set of $R$ rules selected from a sample $D_n$.
		Then, the complexity to calculate $\mu(K_n(\x),D_n)$ for a new observation
		$\x \in \Xset$ is $O(nR)$.
	\end{proposition}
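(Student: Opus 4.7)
The plan is to bound the cost of each step of the \emph{partitioning trick} described in Section~\ref{partition_trick} and check that the dominant contribution is $O(nR)$. I will treat the dimension $d$ as a constant absorbed into the implicit constant, so that testing whether a single point lies in a given hyperrectangle costs $O(1)$; under this convention, checking $\x$ against each of the $R$ rules costs $O(R)$.

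First I would identify the subset $A(\x) \subset \S_n$ of rules activated by $\x$ by iterating through the $R$ rules and retaining those whose hyperrectangle contains $\x$; this is $O(R)$. In the same pass, I would compute the intersection hyperrectangle $\bigcap_{\r \in A(\x)} \r$: since an intersection of hyperrectangles is obtained coordinate by coordinate (maximum of lower endpoints, minimum of upper endpoints), the update cost per rule is $O(1)$ and the total cost is $O(|A(\x)|) \le O(R)$.

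Next, for each sample point $\x_i$ I would decide whether $\x_i \in K_n(\x)$ using the key identity underlying the partitioning trick, namely
\begin{equation*}
K_n(\x) \;=\; \Bigl(\bigcap_{\r \in A(\x)} \r \Bigr) \setminus \Bigl(\bigcup_{\r \in \S_n \setminus A(\x)} \r\Bigr).
\end{equation*}
Testing membership in the precomputed intersection hyperrectangle is $O(1)$, and testing non-membership in each of the at most $R$ non-activated rules is $O(R)$, so the per-point cost is $O(R)$. Summing over the $n$ sample points yields $O(nR)$ operations. Once the indicator $\ind{\x_i \in K_n(\x)}$ is known for every $i$, forming $\mu(K_n(\x),D_n)$ is an $O(n)$ aggregation of the $y_i$. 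Adding the three contributions gives the announced $O(nR)$ bound.

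The only step requiring any care is justifying the displayed set-theoretic identity for $K_n(\x)$. This is really the content of the partitioning trick: the cell of $\K(\S_n)$ through $\x$ is by construction the maximal region that agrees with $\x$ on the activation status of every rule in $\S_n$, which is exactly the set of points that lie in all rules activated by $\x$ and in none of the others. Apart from that, the proof is a routine accounting of elementary operations, and no step dominates $O(nR)$.
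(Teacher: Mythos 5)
Your proof is correct and takes essentially the same route as the paper: your per-point test (membership in the intersection of the rules activated by $\x$ and non-membership in every non-activated rule) is exactly the condition that the paper encodes as the product $k(\x,\x_j,\S_n)$ of $R$ indicator factors in its closed-form expression for $\mu(K_n(\x),D_n)$, and both arguments then count $O(R)$ work for each of the $n$ sample points. The only difference is presentational: the paper writes the formula and reads off $O(nR)$ immediately, whereas you spell out the algorithmic steps and justify the set-theoretic identity for $K_n(\x)$ explicitly.
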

	
	\begin{proof}
		It is sufficient to notice that $\mu(K_n(\x),D_n)$ can be express as follows:
		\begin{equation}\label{partition_eq}
			\mu(K_n(\x),D_n) = \dfrac{\sum \limits_{j=1}^n y_j k(\x, \x_j, \S_n)} {\sum
				\limits_{j=1}^n k(\x, \x_j, \S_n) }, 
		\end{equation}
		with $k(\x, \x_j, \S_n) =  \prod \limits_{i=1}^R \left( \ind{\x \in \r_i}\ind{\x_j \in \r_i} 
		+ \ind{\x \notin \r_i}\ind{\x_j \notin \r_i} \right)$.
		
		In~\eqref{partition_eq}, the complexity $O(nR)$ appears immediately.
		\hfill $\blacksquare$
	\end{proof}
	
	\subsection{Independent Suitable Rules}
	Each dimension of $\Xset$ is discretized into $m_n$ classes such that
	\begin{equation}\label{modalities}
	\dfrac{(m_n)^d}{n} \to 0 , \qquad n \to \infty.
	\end{equation}
	
	To do so empirical quantiles of each variable are considered (when it has more than $m_n$
	different values). Thus, each class of each variable covers about $100/m_n$ percent of the
	sample. This discretization is the reason why RIPE deals with continuous and ordered
	discrete variables only.
	
	It is a theoretical condition. However, it indicates that $m_n$ must be inversely related
	to $d$: The higher the dimension of the problem, the smaller the number of modalities.
	It is a way to avoid \emph{overfitting}.
	
	\noindent
	We first define two crucial numbers:
	\begin{definition} Let $\r=\prod_{k=1}^{d}I_k$ be a hyperrectangle.
		\begin{enumerate}
			\item The \emph{number of activations} of $\r$ in the sample $D_n$ is 
			\begin{equation}
			n(\r, D_n) := \sum\nolimits_{j=1}^{n} \ind{\x_j \in \r}.
			\end{equation}
			\item The \emph{complexity} of $\r$ is 
			\begin{equation}
			cp(\r)=d-\#\{1\le k\le d; I_k =\Xset_k\},
			\end{equation}
		\end{enumerate}
	\end{definition}
	
	\noindent
	We are now able to define a \emph{suitable rule}. 
	
	\begin{definition}
		A rule $\r$ is a \emph{suitable rule} for a sample $D_n$ if and only if it satisfies the two
		following conditions:
		\begin{enumerate}
			\item \textbf{Coverage condition.}
			\begin{equation}\label{cov_crit} 
			\dfrac{n(\r, D_n)}n \leq \frac 1{\ln(m_n)},
			\end{equation}
			\item \textbf{Significance condition.}
			\begin{equation}\label{sini_crit} 
			\left|  \mu(\r, D_n) -  \mu(\Xset, D_n)\right| \geq z(\r, D_n),
			\end{equation}
			for a chosen function $z$.
		\end{enumerate}
	\end{definition}
	
	The coverage condition~\eqref{cov_crit} ensures that the coverage ratio
	$n(\r, D_n) / n$ of a rule tends
	toward $0$ for $n \to \infty$. It is a necessary condition to prove the consistency of
	the estimator which it is the purpose of a companion paper.

	The threshold in the significance condition~\eqref{sini_crit} is to ensure that the 
	local estimators defined on subsets $\r$ is different than the simplest estimator
	which is the one who is identically equal to $\mu(\Xset, D_n)$.
	\newline

	RIPE generates rules of complexity $c \geq 2$ by a \emph{suitable intersection} of rules
	of complexity $1$ and rule of complexity $c-1$.
	\begin{definition}\label{suitable_intersection}
		Two rules $\r_i$ and $\r_j$ define a \emph{suitable intersection} if and only if
		they satisfy the two following conditions:
		\begin{enumerate}
			\item \textbf{Intersection condition:}
			\begin{equation}\label{inter_crit} 
				\begin{split}
					& \r_i \cap \r_j \neq \emptyset,\\
					& n(\r_i \cap \r_j, D_n) \neq n(\r_i, D_n),\\
					& n(\r_i \cap \r_j, D_n) \neq n(\r_j, D_n)
				\end{split}
			\end{equation}
			\item \textbf{Complexity condition:}
			\begin{equation}\label{cp_crit}
				cp(\r_i \cap \r_j) = cp(\r_i) + cp(\r_j).
			\end{equation}
		\end{enumerate}
	\end{definition}
	The intersection condition~\eqref{inter_crit} avoids adding a useless condition for a
	rule. In other words, to define a \emph{suitable intersection} $\r_i$ and $\r_j$ must not
	be satisfied for the same observations of $D_n$.
	And the complexity condition~\eqref{cp_crit} means that $\r_i$ and $\r_j$ have
	no marginal index $k$; $\ind{k} \subsetneq \Xset_k$, in common of $\Xset$.
	
	\section{RIPE Algorithm}
	We now describe the methodology of RIPE for designing and selecting rules.
	The \emph{Python} code is available at \textbf{https://github.com/VMargot/RIPE}.

	The main algorithm is described as Algorithm~\ref{algo_main}. The methodology is divided
	into two parts. The first part aims at finding all suitable rule and the second one aims
	at selecting a small subset of suitable rules that estimate accurately the objective~$g^*$. 
	\newline
	
	\noindent
	The parameters of the algorithm are:
	\begin{itemize}
		\item $m_n$, the sharpness of the discretization, which must fulfill~\eqref{modalities};
		\item $\alpha\in[0,1]$, which specifies the false rejecting rate of the test;
		\item $z$, the significance function of the test;
		\item and $M\in\mathbb N$, the number of rules of complexity $1$ and $c-1$ used to
		define the rules of complexity $c$.
	\end{itemize}
	
	\begin{algorithm}
		\caption{Main}
		\label{algo_main}
		\SetAlgoLined
		\Parameter
		{
			$m_n$, $\alpha$, $z$ and $M$;
		}
		\KwIn
		{
			\begin{itemize}
				\item $(\X,\y)\in\Rset^{n(d+1)}$: data\vspace{-0.2cm}
			\end{itemize}
		}
		\KwOut
		{
			\begin{itemize}\vspace{-0.2cm}
				\item $\S$: the set of selected rules
			\end{itemize}
		}
		Set ${h}_n = 1 /\ln(m_n)$ the maximal coverage ratio of a rule\;
		$\tilde\X \leftarrow Discretize(\X, {m}_n)$ discretization in $m_n$ modalities\;
		$\R \leftarrow Calc\_cp1((\tilde\X, \y), h_n)$\;
		\For{$c =2, \dots, d$}
		{
			$\R' \leftarrow Calc\_cpc((\tilde\X, \y), \R, c, h_n)$\;
						\uIf{$len(\R') = 0$}
						{
							Break\;
						}
						\uElse
						{
			$\R \leftarrow append(\R, \R') $\;
						}
		}
		$\R\leftarrow Sort\_by\_risk(\R, (\tilde\X,\y))$\;
		$\S \leftarrow Select(\R, (\tilde\X, \y))$\;
		Return $\S$\;
	\end{algorithm}
	
	\subsection{Designing Suitable Rules}
	
	The design of suitable rules is made recursively on their complexity. It stops
	at a complexity $c$ if no rule is suitable or if the maximal complexity $c=d$ is
	achieved. 

	\subsubsection{Complexity $1$:}
	The first step is to find suitable rules of complexity $1$. This part is described as
	Algorithm~\ref{algo_cp1}. First notice that the complexity of evaluating all rules of
	complexity $1$ is $O(ndm_n^2)$.
	
	Rules of complexity $1$ are the base of RIPE search heuristic. So all rules are considered
	and just suitable are kept, i.e rules that satisfied the coverage condition~\eqref{cov_crit}
	and the significance condition~\eqref{sini_crit}. Since rules are considered regardless
	of each others, the search can be parallelized. 
	
	At the end of this step, the set of suitable rules is sorted by their empirical risk
	\eqref{empirical_risk}, $\mathcal L_n(\hat g^{\{\r\}})$, with $\hat g^{\{\r\}}$ the
	predictor based on exactly one rule $\r$.
	
		\begin{algorithm}
		\caption{Calc\_cp$1$}
		\label{algo_cp1}
		\SetAlgoLined
		\Parameter
		{
			$m_n$, $\alpha$ and $z$;
		}
		\KwIn
		{
			\begin{itemize}
				\item $(\X,\y)\in\Rset^{n(d+1)}$: data
				\item $h_n$: parameter \vspace{-0.2cm}
			\end{itemize}
		}
		\KwOut
		{\begin{itemize}\vspace{-0.2cm}
				\item $\R$: the set of all suitable rules of complexity $1$\;
		\end{itemize}}
		$\R \leftarrow \emptyset$\;
		\For{$i =1, \dots, d$}
		{
			$\x_i \leftarrow \X[i]$, the i$^{th}$ feature \;
			\For{$b_{min} = 0, \dots, {m}_n$ }
			{
				\For{$b_{max} = b_{min}, \dots, {m}_n$}
				{
					Set $\r = \prod_{k=1}^{d} I_k$ with 
					$\begin{cases}
					I_k = [0, m_n],\, k\neq i\\
					I_i = \left[b_{min}, b_{max}\right]\\
					\end{cases}$\;
					\uIf{is\_suitable($\r$, $(\X,\y)$, $h_n$, $z$, $\alpha$)}
					{
						$\R \leftarrow append(\R, \r) $\;
					}
				}
			}
		}
		Return $\R$
	\end{algorithm}

	\subsubsection{Complexity $c$:}
	Among the  suitable rules of complexity $1$ and $c-1$ sorted by their empirical risk
	\eqref{empirical_risk}, RIPE selects the $M$ first rules of each complexity ($1$ and $c-1$).
	Then it generates rules of complexity $c$ by pairwise \emph{suitable intersection}
	according to the definition~\ref{suitable_intersection}.
	It is easy to see that the complexity of evaluating all rules of complexity $c$ obtained
	from their intersections is then $O(nM^2)$.
	
	The parameter $M$ is to control the computing time and it is fixed by the statistician.
	This part is described as Algorithm~\ref{algo_upcp}.

	\begin{algorithm}[t]
		\caption{Calc\_cp$c$}
		\label{algo_upcp}
		\SetAlgoLined
		\Parameter
		{
			$\alpha$, $z$ and $M$;
		}
		\KwIn
		{
			\begin{itemize}
				\item $(\X,\y)\in\Rset^{n(d+1)}$: data
				\item $\R$: set of rules  of complexity up to $c-1$
				\item $c$: complexity
				\item $h_n$: parameter\vspace{-0.2cm}
			\end{itemize}
		}
		\KwOut
		{
			\begin{itemize}\vspace{-0.2cm}
				\item $\R_{c}$: set of suitable rules of complexity ${c}$
			\end{itemize}
		}
		$\R_{c} \leftarrow \emptyset$\;
		$\R\leftarrow Sort\_by\_risk(\R, (\X,\y))$\;
		$\R_{1} \leftarrow$ the $M$ first rules of
		complexity $1$ in $\R$\; 
		$\R_{c-1} \leftarrow$ the $M$ first  rules
		of complexity ${c}-1$  in $\R$\;
		\uIf{$\R_{1} \neq \emptyset$ and $\R_{c-1} \neq \emptyset$}
		{
			\For{$\r_1$ in $\R_1$}
			{
				\For{$\r_2$ in $\R_{c-1}$}
				{
					\uIf{is\_suitable\_intersection($\r_1$, $\r_2$)}
					{
						Set $\r = \r_1 \cap \r_2$\;
						\uIf{is\_suitable($\r$, $(\X,\y)$, $h_n$, $z$, $\alpha$)}
						{
							$\R_{c} \leftarrow append(\R_{c}, \r) $\;
						}
					}
				}
			}
		}
		Return $\R_{c} $
	\end{algorithm}
		
	\subsection{Selection of Suitable Rules}\label{selection_method}
	After designing suitable rules, RIPE selects an optimal set of rules.
	Let $\R_n$ be the set of all suitable rules generated by RIPE. The optimal subset 
	$\S_n^* \subset \R_n$ is defined by
	\begin{equation}\label{selection_formula}
		\S_n^*:=\argmin  \limits_{\S \subset \R_n} \mathcal L_n (\hat g^{\{\S\}} )
	\end{equation}
	is the empirical risk~\eqref{empirical_risk} of the predictor based on $\S$.
	
	Each computation of the empirical risk \eqref{selection_formula}
	requires the partition from the set $\S$ of rules, as described in Section
	\ref{partition_trick}.
	The complexity to solve~\eqref{selection_formula} naively, comparing all the possible sets
	of rules, is exponential in the number of suitable rules.
	
	To work around this problem, RIPE uses Algorithm~\ref{algo_select}, a greedy recursive
	version of the naive algorithm: it does not explore all the subsets of $\R_n$. Instead,
	it starts with a single rule, the one with minimal risk, and iteratively keeps/leaves
	the rules by comparing the risk of a few combinations of these rules. More
	precisely, suppose that 
	\begin{itemize}
		\item $\r_1,\dots,\r_N$ are the suitable rules, sorted by increasing empirical risk;
		\item $\r_1,\dots,\r_k,\; k<N,$ have already been tested;
		\item $j$ of them, say $\S\subset\{\r_1,\dots,\r_k\}$ have been kept, the $k-j$ other
		being left.
	\end{itemize}
	Then $\r_{k+1}$ is tested in the following way : 
	\begin{itemize}
		\item Compute the risk of $\S$, $\S\cup\{\r_{k+1}\}$ and of all $\S\cup\{\r_{k+1}\}
		\!\setminus \!\{\r\}$ for $\r\in\S$;
		\item Keep the rules corresponding to the minimal risk;
		\item Possibly leave \emph{once for all}, the rule in $\{\r_1,\dots,\r_{k+1}\}$ which
		is not kept at this stage.
	\end{itemize}
	Thus, instead of testing the $2^N$ subsets of rules, we make $N$ steps and at the $k^{\text{th}}$
	step we test at most $k+2$ (and usually much less) subsets, which leads to a theoretical
	overall maximum of $O(N^2)$ tested subsets. The heuristic of this strategy is that rules
	with low risk are more likely to be part of low risk subsets of rules; and the minimal risk
	is searched in subsets of increasing size.
	
	\begin{algorithm}
		\caption{Select}
		\label{algo_select}
		\SetAlgoLined
		\KwIn
		{
			\begin{itemize}
				\item $\R$: set of rules sorted by increasing risk			
				\item $(\X,\y)\in\Rset^{n(d+1)}$: data\vspace{-0.2cm}
			\end{itemize}
		}
		\KwOut
		{
			\begin{itemize}\vspace{-0.2cm}
				\item $\S$: subset of selected rules approaching the argmin~\eqref{selection_formula}
				over all subsets of $\R$ \;
			\end{itemize}
		}
		Set $\S=\{\R(1)\}$\;
		\For{$i =2, \dots, len(\R)$}
		{
			Set $\TT=\{\;\S\;;\;\S\!\cup\!\{\R(i)\}\;\}$\;
			\For{j=1,\dots,len(\S)}
			{
				$\TT\leftarrow append(\;\TT\;;\;\S\!\cup\!\{\R(i)\}\!\!\setminus\!\!\{\S(j)\}\;)$
			}
			$\TT\leftarrow Sort\_by\_risk(\TT,(\X,\y))$\;
			$\S\leftarrow\TT(1)$
		}
		Return $\S$\;
	\end{algorithm}

	\section{Experiments}
	The experiments have been done with \emph{Python}. To assure reproducibility the \emph{random seed}
	has been set at $42$. The codes of these experiments are available in \textbf{GitHub} with
	the package RIPE.
	
	\subsection{Artificial Data}
	The purpose here it is to understand the process of RIPE, and how it can explain a phenomenon.
	We generate a dataset of $n = 5000$ observations with $d = 10$ features. The target variable
	$Y$ depends on two features $X_1$ and $X_2$ whose are identically distributed on $[-1,1]$.
	In order to simulate features assimilated to white noise, the others variables follow a centered-
	reduced normal distribution $\mathcal{N}(0,1)$. The model is the following
	\begin{equation}\label{model}
	y_i =F^*(\x_i)+\epsilon_i
	\end{equation}
	with $\epsilon_i \sim \mathcal{N}(0,1)$ and
	\begin{equation}\label{nolin_data_y}
		F^*(\x_i) = -2 \times \ind{\left\{x_{1,i}^2 + x_{2,i}^2 > 0.8 \right\}} + 2 \times 
		\ind{\left\{x_{1,i}^2 + x_{2,i}^2 < 0.5 \right\}}
	\end{equation}
	
	The dataset is randomly split into training set $D^1_m$ and test set $D^2_l$ such as $D^1_m$
	represents $60\%$ of the dataset. RIPE uses significance test based on
	\eqref{bernstein_test} with a threshold $\alpha = 0.05$ and $m_n=5$. 
	
	On Figure \ref{nolin_rez}, we have on the left, the true model \eqref{nolin_data_y} according to
	$X_1$ and $X_2$ with the realization of $Y$ not used during the learning. On the right, the model
	inferred by RIPE.
	
	On Tab \ref{nolin_rules} we represent the set of selected rules. In this case rules form
	a covering of the features space. So it is not necessary to add the \emph{no rule satisfied}
	statement (see Def \ref{norule_def}).
	
	\begin{figure}[h]
		\centering
		\includegraphics[scale=0.35]{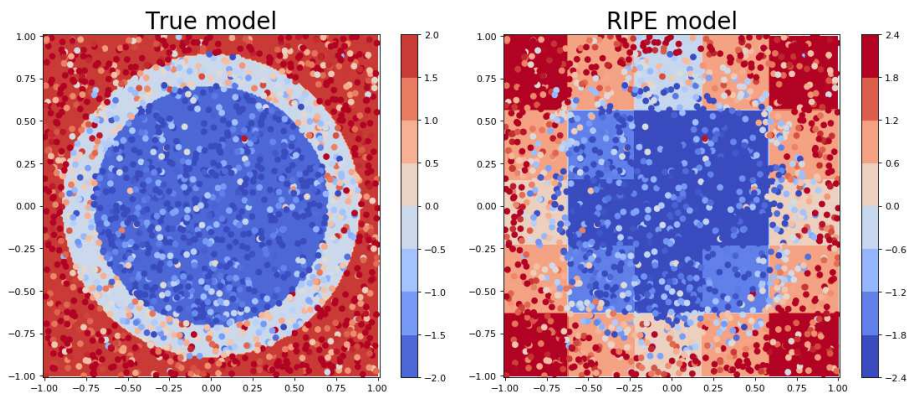}
		\caption{The true model vs the model inferred by RIPE.}
		\label{nolin_rez}
	\end{figure}
	
	\begin{table}
		\centering
		\scalebox{0.9}{
		\begin{tabular}{l l c c c c}
			Rule &  Conditions &  Coverage &  Prediction &     Z &       $MSE$ \\
			\hline
			R 0(2)- & $ X0 \in [1.0, 3.0] \ \& \ X1 \in [1.0, 3.0] $ & $     0.36$ & $      -0.91$ & $ 0.09$ & $ 2.14$\\
			R 1(2)- & $ X0 \in [1.0, 3.0] \ \& \ X1 \in [2.0, 4.0]$ & $     0.36$ & $      -0.57$ & $ 0.09$ & $ 3.31$\\
			R 2(2)- & $ X0 \in [2.0, 4.0] \ \& \ X1 \in [1.0, 3.0]$ & $     0.35$ & $      -0.54$ & $ 0.09$ & $ 3.40$\\
			R 3(1)- & $                       X0 \in [2.0, 3.0]$ & $     0.40$ & $      -0.46$ & $ 0.08$ & $ 3.48$\\
			R 4(1)- & $                       X0 \in [1.0, 2.0]$ & $     0.40$ & $      -0.43$ & $ 0.08$ & $ 3.55$\\
			R 5(1)- & $                       X1 \in [1.0, 2.0]$ & $     0.40$ & $      -0.43$ & $ 0.08$ & $ 3.55$\\
			R 6(1)+ & $                                  X0 = 4.0$ & $     0.20$ & $       0.63$ & $ 0.11$ & $ 3.65$\\
			R 7(1)+ & $                                  X1 = 4.0$ & $     0.20$ & $       0.56$ & $ 0.12$ & $ 3.74$\\
			R 8(1)+ & $                       X1 \in [0.0, 1.0]$ & $     0.40$ & $       0.19$ & $ 0.08$ & $ 3.95$\\
			R 9(1)+ & $                       X0 \in [0.0, 1.0]$ & $     0.40$ & $       0.15$ & $ 0.08$ & $ 3.99$\\
			\hline
		\end{tabular}
		}
		\caption{Summary of selected rules with conditions interval express as modalities}
		\label{nolin_rules}
	\end{table}

	\subsection{High Dimension Simulation}\label{high_dimension_problem}
	In this simulation, we use the function \emph{make\_regression}, from the \textbf{Python}
	package \emph{sklearn} (\cite{scikit-learn}), to generate a random linear regression model with $n$
	observations and $d$ variables. Among these variables, $p$ are informative and the rest are
	gaussian centered noise.
	
	In this example we take $n=500$, $d=1000$ and $p=5$ to simulate a noisy high dimensional problem.
	The data are randomly split into a training set and a test set, with a ratio of $60\% $
	\textbackslash $40\%$, respectively.
	
	We use two others algorithms in this case: Decision Tree (DT) \cite{CART} without pruning
	and Random Forests (RF) \cite{Breiman01}, all from the package
	of python \textbf{sklearn} \cite{scikit-learn}. In order to evaluate the performance of
	our model, the normalized mean square error ($NMSE$) is computed. 
	
	Results are summarized in Tab~\ref{resum_tab}. Difference between the $NMSE$ of the training
	and the $NMSE$ of test indicates that Decision Tree and Random Forests overfit in this context.
	Conversely, RIPE infers a model which is more general (see Tab~\ref{resum_tab}). Indeed,
	RIPE is able to describe the model with only $14$ rules (see Tab~\ref{resum_rules}) which
	have conditions on only seven variables from $1000$. Among these selected variables
	only two are very important $X_{976}$ and $X_{298}$ (see Tab~\ref{var_count}).
	
	In this case, RIPE discretizes each variable in $5$ modalities from $0$ to $4$. Table
	\ref{resum_rules} presents the selected rules with their conditions. The rule $R 14$
	is the \emph{no rule satisfied} statement (see Definition \ref{norule_def}).

	\begin{table}
		\centering
		\scalebox{0.9}{
		\begin{tabular}{l l c c c c}
			Rule &  Conditions &  Coverage &  Prediction &     Z &       $MSE$ \\
			\hline
			R 1(2)-  &  $X_{976} \in [0.0, 2.0]  \ \& \  X_{298} \in [0.0, 2.0] $ & $0.35$ & $-0.83$ &  $0.28$ & $7808.24$ \\
			R 2(1)-  &  $X_{976} = 0.0$ & $0.20$ & $-1.07$ &  $0.46$ &   $8907.38$ \\
			R 3(1)+  &  $X_{976} = 4.0$ & $0.20$ & $0.88$ &  $0.41$ &  $10081.34$ \\
			R 4(2)-  &  $X_{976} \in [0.0, 1.0]  \ \& \  X_{336} \in [0.0, 1.0]$ & $0.19$ & $ -0.89$ &  $0.40$ &  $10245.30$ \\
			R 5(2)+  &  $X_{298} \in [2.0, 4.0]  \ \& \  X_{976} \in [2.0, 3.0] $&      $0.24$ &        $0.65$ &  $0.27$ &  $10781.00$ \\
			R 6(1)+  &  $X_{298} = 4.0 $&      $0.20$ &        $0.73$ &  $0.43$ &  $10813.83$ \\
			R 7(1)-  &  $X_{298} = 0.0 $&      $0.20$ &       $-0.73$ &  $0.42$ &  $10822.09$ \\
			R 8(2)-  &  $X_{298} \in [0.0, 1.0]  \ \& \  X_{336}  \in [0.0, 1.0] $&      $0.20$ &      $ -0.66$ &  $0.38$ &  $11109.50$ \\
			R 9(2)+  & $X_{976} \in [2.0, 4.0]  \ \& \  X_{564}  = 4.0 $&      $0.14$ &        $0.77$ &  $0.45$ &  $11253.10$ \\
			R 10(2)+  & $X_{976} \in [2.0, 4.0]  \ \& \  X_{163}  = 4.0 $&      $0.13$ &        $0.75$ &  $0.44$ &  $11419.93$ \\
			R 11(2)- & $X_{976} \in [0.0, 1.0]  \ \& \  X_{945}  = 2.0 $&      $0.10$ &       $-0.87$ &  $0.51$ &  $11427.16$ \\
			R 12(2)- & $X_{976} \in [0.0, 1.0]  \ \& \  X_{733}  = 1.0 $&      $0.10$ &      $-0.84$ &  $0.58$ &  $11524.60$ \\
			R 13(1)+ &  $X_{976} = 3.0 $&      $0.20$ &        $0.55$ & $ 0.31$ &  $11548.05$ \\
			R 14 & No rule activated  &      $0.02$ &        $-0.35$ & $ 0.45$ &  $12440.40$ \\
			\hline
		\end{tabular}
		}
		\caption{Summary of selected rules with conditions interval express as modalities}
		\label{resum_rules}
	\end{table}
	
	\begin{table}
		\centering
		\scalebox{0.8}{
		\begin{tabular}{| l | p{20mm}|| c | c | c |c |}
			\hline
			Algorithm & Parameters & $NMSE$ training & $NMSE$ test & Nb of rules & Complexity max\\ \hline
			DT & / & $0.0$ & $0.46$ & $350$ & $14$\\ \hline
			RF & m\_tree = $200$  \newline m\_try = $d/3$ & $0.04$ & $0.39$ & $128.25$\tablefootnote{
				It is the mean of the number of rules of each tree} & $21$ \\ \hline
			RIPE &  $M$=300 \newline z: see \eqref{bernstein_test}\newline
			$\alpha$=$0.05$ & $0.13$ & $0.30$ & $14$ & $2$ \\ \hline
		\end{tabular}
		}
		\caption{Performance results of RIPE compared to two supervised learning algorithms:
			The Decision Tree (DT) and the Random Forests (RF).}
		\label{resum_tab}
	\end{table}
		
	\begin{table}
		\centering
		\begin{tabular}{|l||ccccccc|}
			\hline
			Variable &  $X_{976}$ & $X_{298}$ & $X_{336}$ & $X_{163}$ & $X_{945}$ & $X_{565}$ & $X_{733}$ \\ \hline
			Count & $10$  & $5$  & $2$ & $1$  & $1$  & $1$  & $1$  \\ \hline
		\end{tabular}
		\caption{Count of variable occurencies in rules selected by RIPE.}
		\label{var_count}
	\end{table}

	\newpage
	\subsection{Real Data}
	In this section, we present a quick overview of the use of the alogirthm RIPE on the well-known
	Kaggle's\footnote{https://www.kaggle.com/} dataset: \emph{Titanic}.
	It is a binary classification problem. The goal is to predict which passengers survived the tragedy. We have
	kept only $7$ features. We have dropped features \emph{Name}, \emph{Ticket Number}, and \emph{Cabin Number}
	which are considered irrelevant for a first study, and we haven't done data engineering.

	The accuracy rate given by Kaggle for RIPE's predictions on the test set is $0.765$, but the most interesting
	output is the description of the model.

	This can be sum up in the table \ref{titanic_tab} and with the two following figures. Figure \ref{titanic1} shows
	that the most important feature is the \emph{fare} which appears seven times in the set of selected rules. The
	figure \ref{titanic2} permits to be more specific. Indeed, we can notice that the cheaper the ticket, the higher
	the risk to die.

	This example shows the kind of interpretation that RIPE could offer to a statistical study on an unknown dataset.
	
	\begin{figure}[h]
		\centering
		\includegraphics[scale=0.7]{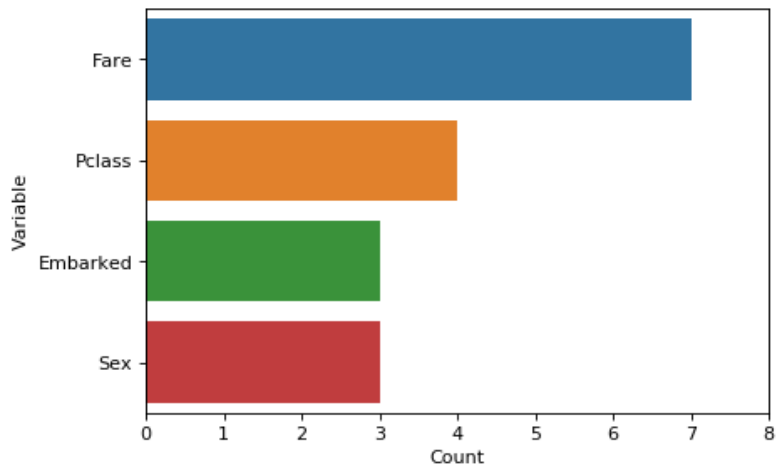}
		\caption{Distribution of rules by variables}
		\label{titanic1}
	\end{figure}

	\begin{figure}[h]
		\centering
		\includegraphics[scale=0.7]{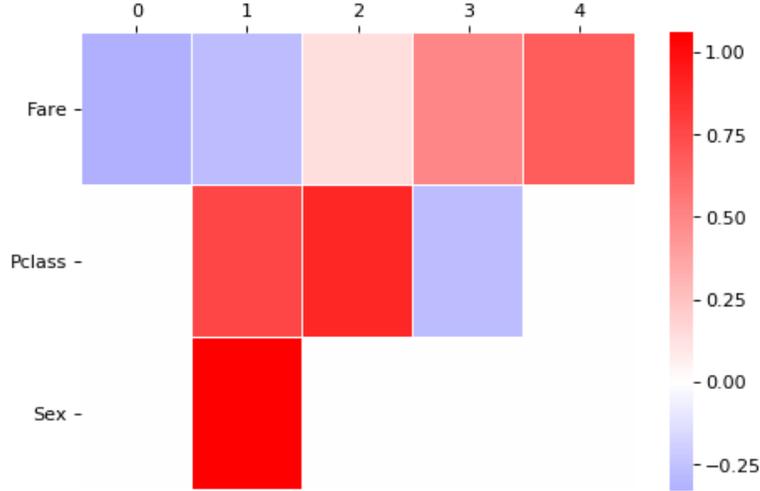}
		\caption{Sum of the prediction of rules for each quantitative variable by modalities}
		\label{titanic2}
	\end{figure}

	\begin{table}[h]
		\centering
		\scriptsize
		\begin{tabular}{l l c c c c}
			Rule &  Conditions &  Coverage &  Prediction &     Z &       $MSE$ \\
			\hline
			R 0(2)+ & $           Sex = female \ \& \  Pclass \in [1.0, 2.0]$ & $ 0.19$ & $        1.16$ & $  0.26$ & $       0.32 $\\
			R 1(2)+ & $                        Sex = female \ \& \  Fare = 4.0$ & $ 0.11$ & $        1.09$ & $  0.35$ & $       0.41 $\\
			R 3(2)+ & $             Sex = female \ \& \  Embarked \in [C, Q]$ & $ 0.12$ & $        0.93$ & $  0.32$ & $       0.42 $\\
			R 2(1)+ & $                                     Pclass = 1.0$ & $      0.24$ & $        0.51$ & $  0.21$ & $       0.43 $\\
			R 5(1)- & $                            Fare \in [0.0, 1.0]$ & $      0.38$ & $ -0.38$ & $  0.14$ & $       0.43 $\\
			R 6(2)+ & $             Pclass \in [1.0, 2.0] \ \& \  Fare = 4.0$ & $ 0.18$ & $        0.63$ & $  0.25$ & $       0.43 $\\
			R 4(2)- & $             Pclass = 3.0 \ \& \  Fare \in [0.0, 2.0]$ & $ 0.47$ & $       -0.27$ & $  0.13$ & $       0.44 $\\
			R 7(2)+ & $  Fare \in [2.0, 4.0] \ \& \  Embarked \in [C, NaN]$ & $      0.15$ & $        0.54$ & $  0.27$ & $       0.45 $\\
			R 8(2)+ & $    Fare \in [2.0, 4.0] \ \& \  Embarked \in [C, Q]$ & $      0.17$ & $        0.45$ & $  0.25$ & $       0.45 $\\
			R 9(1)- & $                            Fare \in [1.0, 2.0]$ & $      0.41$ & $ -0.16$ & $  0.14$ & $       0.46 $\\
			R 10    &  No rule activated & $      0.09$ & $       -0.40$ & $  0.28$ & $       0.47 $\\
			\hline
		\end{tabular}
		\label{titanic_tab}
		\caption{Summary of selected rules with conditions interval express as modalities}
	\end{table}

	\section{Conclusion and Future Work}
	In this paper we present a novel \emph{understandable} predictive algorithm, named RIPE.
	Considering the regression function is the best predictor RIPE has been developed to be
	a simple and accurate estimator of the regression function. The algorithm identified a
	set of \emph{suitable rules}, not necessary disjoint, of the form \emph{If-Then} such
	as their \emph{If} conditions are hyperrectangles of the features space $\Xset$. Then,
	the estimator is built on the partition generated by the \emph{partitioning trick}.
	Its computational complexity is linear in the data dimension $O(dn)$.
	
	RIPE is different from existing methods which are based on a space-partitioning tree.
	It is able to generate a fine partition from a set of \emph{suitable rules}, reasonably
	quickly such that their cells are explained as a list of \emph{suitable rules}. Whereas
	there is a one-to-one correspondence between a rule and a cell of a partition provided
	by a decision tree. So to have a finer partition decision trees must be deeper and rules
	become less and less \emph{understandable}.
	Furthermore, on the contrary to decision trees, the partition generated by RIPE can have cells
	which are not hyperrectangles.
	
	A paper on the universal consistency of RIPE under some technical conditions
	is in preparation.

	\newpage
	
	\section*{Appendix: Examples of Significance Function}\label{appendix}
	
	Here, we present three functions $z$ used in practice.
	\newline
	
	\begin{enumerate}
		\item  The first one is based on the H\oe ffding's inequality \cite{Hoeffding63}:
		\begin{equation}\label{hoeffding_test}
		z(\r, D_n, \alpha) = \dfrac{(M-m) \sqrt{\ln(2 / \alpha)}}{\sqrt{2n(\r, D_n)}},
		\end{equation}
		where $M = \max \limits_{i\in \{1, \dots, n\}} y_i$ and $m = \min \limits_{i\in \{1, \dots, n\}} y_i$.
	
		\item The second one is based on the Bernstein's inequality:
		\begin{equation}\label{bernstein_test}
		z(\r, D_n, \alpha) = \dfrac{1}{6n(\r, D_n)}
		\left( M\ln\left( \dfrac{2}{\alpha}\right) + \sqrt{M^2\ln\left(\dfrac{2}{\alpha}\right)^2
			+ 72 v\ln\left( \dfrac{2}{\alpha}\right)}\right),
		\end{equation}
		where $M = \max \limits_{i\in \{1, \dots, n\}} y_i$ and $v = \sum_{i=1}^{n} y_i^2$.
	
		\item And the last one is 
		\footnotesize
		\begin{equation}\label{condition}
		z(\r,D_n) \:= \sqrt{\left(  \beta_{\r,n} \dfrac{1}{n-1}\sum_{i=1}^n \left(Y_i - \bar{Y}\right)^2 - \dfrac{1}{n(\r, D_n)-1}\sum_{i = 1}^n \ind{X_i \in \r}\left(Y_i - \bar{Y}_\r\right)^2 \right) },
		\end{equation}
		\normalsize
		where
		\begin{equation}\label{beta_value}
			\beta_{\r, n} = \dfrac{n}{\sum_{\r' \in \S_n}n(\r', D_n)} \max \limits_{A} \# \{ \r'\in \S_n: \r' \cap A \neq \emptyset \},
		\end{equation}
		with the $\max$ is taken upon the set of cells of $\K(\S_n)$ contained in $\r$. It means the set defined by
		$$\left\{A \in \K(\S_n): A \subseteq \r \right\}.$$
	\end{enumerate}

\end{document}